\newcommand{\lt}{<}
\newcommand{\gt}{>}
\title{Privacy-Preserving Conformal Prediction Under Local Differential Privacy}
\author{
\Name{Coby Penso}\Email{coby.penso24@gmail.com}\\
\addr{Bar-Ilan University, Israel}\\
\Name{Bar Mahpud}\Email{mahpudb@biu.ac.il}\\
\addr{Bar-Ilan University, Israel}\\
\Name{Jacob Goldberger}\Email{jacob.goldberger@biu.ac.il}\\
\addr{Bar-Ilan University, Israel}\\
\Name{Or Sheffet}\Email{or.sheffet@biu.ac.il}\\
\addr{Bar-Ilan University, Israel}}
\begin{document}
\maketitle

\begin{abstract}

Conformal prediction (CP) provides sets of candidate classes with a guaranteed probability of including the true class. However, it typically relies on a calibration set with clean labels. We address privacy-sensitive scenarios where the aggregator is untrusted and can only access a perturbed version of the true labels.
We propose two complementary approaches under local differential privacy (LDP).
In the first approach, users do not access the model but instead provide their input features and a perturbed label using a \(k\)-ary randomized response. In the second approach, which enforces stricter privacy constraints, users add noise to their conformity score by binary search response. This method requires access to the classification model but preserves both data and label privacy. Both approaches compute the conformal threshold directly from noisy data without accessing the true labels.
We prove finite-sample coverage guarantees and demonstrate robust coverage even under severe randomization. This approach unifies strong local privacy with predictive uncertainty control, making it well-suited for sensitive applications such as medical imaging or large language model queries, regardless of whether users can  (or are willing to)  compute their own scores.

\end{abstract}

\section{Introduction}

In safety-critical applications of machine learning, it is essential for models to only make predictions when they are confident. One effective approach is to provide a (preferably small) set of possible class candidates that includes the true class with a predetermined level of certainty. This method is particularly well-suited to domains such as medical diagnosis systems, where physicians make the final decisions. By narrowing the possible diagnoses down to a manageable set, this approach supports practitioners while maintaining a controlled probability of error. A general technique for producing such prediction sets without making strong assumptions about the data distribution is called Conformal Prediction (CP) \citep{vovk2005conformal,angelopoulos2023conformal}. CP generates a prediction set with a formal guarantee that the true class is included with at least the specified confidence level. The objective is to minimize the size of the prediction set while upholding this confidence guarantee.
 The appeal of CP lies in its ability to provide distribution-free guarantees, making it widely applicable across diverse domains, including safety-critical areas such as medical diagnostics \citep{lu2022improving,olsson2022estimating} and fairness-driven applications \citep{lu2022fair}.

CP typically relies on having access to a labeled calibration set to set the CP threshold. However, in many real-world scenarios, such clean labels may be unavailable due to data privacy concerns.  For instance, in scenarios involving medical or personal data, or if the data aggregator (e.g.\ a cloud-based ML service) is considered \emph{untrusted}, privacy constraints may prohibit access to true labels and data, requiring a privacy-preserving mechanism that provides \emph{Local Differential Privacy} (LDP). In such situations, the aggregator might only be allowed to view \emph{noisy versions} of the labels. LDP has gained popularity as a strong privacy paradigm that enables data owners to randomize their data locally before sharing it with an untrusted aggregator, thus ensuring that sensitive information remains protected (e.g.\ Google's RAPPOR \citep{erlingsson2014rappor} and Apple's locally private data collection of emojis and usage patterns \citep{apple2017privacy}).

This paper tackles the challenge of applying conformal prediction when calibration-set labels (or conformity scores) must be protected through privacy-preserving mechanisms. Specifically, we introduce a Local Differentially Private Conformal Prediction (LDP-CP) framework that balances privacy with real-world considerations such as user-side computational capacity, aggregator trustworthiness, and intellectual property.

 We present two complementary LDP-CP solutions. LDP-CP-L locally perturbs labels using a randomized response, which shifts all score-related computations to the aggregator. This design suits cases where users have minimal computational resources, or the model’s internal structure is not disclosed to them, while not sharing their sensitive labels. However,  it only achieves label-DP \citep{beimel2013private, ghazi2021}. In contrast, LDP-CP-S allows users to generate and locally randomize their conformity scores, which is ideal for scenarios where both feature and label privacy are paramount and the user can handle additional computational tasks. We also offer guidelines on choosing which method aligns better with specific privacy goals, resource constraints, and per-dataset properties such as sample size and the number of classes.
By considering both score-based and label-based perturbations, we provide a flexible framework that adapts to various privacy budgets and computational setups. This framework guarantees valid coverage for the true labels asymptotically and in finite samples. As a result, LDP-CP aligns well with modern demands for secure, decentralized data handling.

\paragraph{Relationship to Past Work.} Conformal prediction in the presence of label noise has received growing attention \citep{einbinder2022conformal,sesia2023adaptive,  Clarkson2024,Penso_2024},
 with methods that adjust the calibration threshold when a known noise matrix corrupts labels. Simultaneously, LDP has become a leading approach for protecting sensitive data at the user end \citep{warner1965randomized,kairouz2016discrete,wang2017locally}, allowing users to randomize their own inputs (e.g., labels) before sharing with an untrusted aggregator. We bridge these lines of work by recognizing that LDP can be \emph{seen as a known noisy channel} on the labels or conformal scores, we can plug that channel into a ``noise-aware" conformal procedure. The end result is a conformal predictor whose coverage remains valid, while also protecting each user's label or score via $\varepsilon$-LDP. To the best of our knowledge, there is no prior work on LDP conformal prediction. Thus, we are the first to study local differential private conformal prediction. The closest work to ours is that of \citet{angelopoulos2022private}, which suggests a centrally differentially private conformal prediction procedure where a trusted aggregator has access to raw data. In our setting, the aggregator never observes true labels directly, hence, closing a key gap in privacy-preserving conformal prediction research. The contribution of this paper is threefold:
    we introduce two complementary LDP-CP methods that accommodate different privacy constraints and computational setups,
    we prove coverage guarantees for both methods under LDP constraints and
    we demonstrate the effectiveness of our approaches in privacy-sensitive applications, such as medical data analysis and untrusted cloud-based ML services.
This work bridges the gap between privacy-preserving mechanisms and conformal prediction, by providing a foundation for robust and private uncertainty quantification in real-world scenarios. Unlike other methods that rely on trusted aggregators, our approaches ensure privacy directly at the user level, which aligns with modern demands for decentralized privacy protection.

\section{Background}
\subsection{Conformal Prediction (CP)}
We consider a classification setting with $k$ classes. For an input $x$, the model outputs a probability vector $\{p(y=i \mid x;\theta)\}_{i=1}^k$. A \emph{conformity score} $S(x,y)$ measures how well the model's predicted distribution aligns with the true label $y$ (lower scores indicate  better alignment).
Some of the most popular score examples are {HPS} \citep{vovk2005conformal}:
$$
           S_{\text{HPS}}(x,i) \;=\; 1- p_i, \text{  where } p_i = p(y=i|x;\theta),$$ 
       and {APS} \citep{romano2020classification}:
      $$ S_{\text{APS}}(x,i)=\sum_{\{j:\;p_j \ge p_i\}}\,p_j + u \cdot p_i,
    \,  u \sim U[0,1].
   $$
The CP prediction set of a data point $x$ is defined as $C_q(x)=\{y| S(x,y) \le q\}$ where $q$ is a threshold that is found using a labeled calibration set $(x_1,y_1),...,(x_n,y_n)$. The CP theorem states that if we set  $q$ to be the $(1\!-\!\alpha)$ quantile of the conformal scores $S(x_1,y_1),...,S(x_n,y_n)$  we can guarantee that $1\!-\!\alpha \le p( y\in C_q(x)) \le 1\!-\!\alpha + \frac{1}{n+1}$,
where $x$ is a test point and $y$ is its unknown true label \citep{vovk2005conformal}.

\subsection{Conformal Prediction with Noisy Labels}
Assume we are given a calibration set with noisy labels. 
 Then, applying the standard CP procedure on this   set, no longer yields correct coverage for the true labels. 
Recent studies \citep{einbinder2022conformal,sesia2023adaptive, Clarkson2024,Penso_20242,Penso_2024} address this challenge by adjusting the calibration procedure to account for the noise. 
Assume a uniform  noise model with parameter $\beta\in[0,1]$:
\begin{equation}
\label{eq:flip}
p(\tilde{y} \mid y)
=\,
{1}_{\{\tilde{y}=y\}}(1-\beta) + \frac{\beta}{k},
\end{equation}
where $y$ is the true label and $\tilde{y}$ is its noisy version.
\citet{Penso_2024}  used the noisy calibration set to compute an approximation $\hat{F}^c(q)$ of the clean label distribution $p(S(x,y)<q)$. They showed that we can get the following finite sample coverage guarantee  that depends on the noise level $\beta\in[0,1]$: 
\begin{theorem} 
    Assume you have a noisy calibration set of size $n$ with noise level $\beta$,  set $\Delta(n,\beta,\delta)=\sqrt{\frac{\log(4/\delta)}{2nh^2}}$ s.t. $h=\frac{1-\beta}{1+\beta}$ and that you pick $q$ such that $\hat{F}^c(q) \ge 1-\alpha$. Then with probability at least $1-\delta$ (on the random calibration set), we have that if $(x,y)$ are sampled from the clear label distribution we get:
    $$ p(y\in C_q(x)) \ge  1-\alpha - \Delta.$$
\label{theorem:nacp}
\end{theorem}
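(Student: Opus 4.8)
The plan is to notice that the target is itself the value of a CDF. Since the prediction set is $C_q(x)=\{y:S(x,y)\le q\}$, we have $p(y\in C_q(x))=\Pr_{(x,y)\sim\mathrm{clean}}\!\big(S(x,y)\le q\big)=F^c(q)$, and $q$ is chosen so that the noise-corrected estimate obeys $\hat F^c(q)=1-\alpha$. Hence the claim is equivalent to the uniform estimation bound $\sup_{q'}\big|\hat F^c(q')-F^c(q')\big|\le\Delta$ holding with probability at least $1-\delta$, which is what I would actually prove. The first step is to write down the exact population relation forced by the noise model \eqref{eq:flip}: because $\tilde y_i=y_i$ with probability $1-\beta$ and $\tilde y_i$ is a uniformly random label otherwise, the noisy score $\tilde s_i=S(x_i,\tilde y_i)$ has CDF $F^n(q)=(1-\beta)F^c(q)+\beta\,\bar F(q)$, where $\bar F(q):=\tfrac1k\sum_{j=1}^k\Pr\!\big(S(x,j)\le q\big)$ is the score CDF under a uniformly drawn label. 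Inverting, $F^c(q)=\dfrac{F^n(q)-\beta\,\bar F(q)}{1-\beta}$, which is precisely what steps (2)--(3) of \citet{Penso_2024} operationalize.

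Next I would estimate the right-hand side by the natural plug-in $\hat F^c(q)=\dfrac{\hat F^n(q)-\beta\,\hat{\bar F}(q)}{1-\beta}$ (clipped to $[0,1]$), where $\hat F^n(q)=\tfrac1n\sum_i \mathbf 1\{\tilde s_i\le q\}$ is computed from the perturbed labels and $\hat{\bar F}(q)=\tfrac1{nk}\sum_i\sum_j \mathbf 1\{S(x_i,j)\le q\}$ is computed by the aggregator from the features alone (it has the model). Both are averages of iid, $[0,1]$-valued, monotone-in-$q$ terms with the correct means. I would then control the two errors separately: the pairs $(x_i,\tilde y_i)$ are iid from the noisy joint law, so the Dvoretzky--Kiefer--Wolfowitz inequality gives $\Pr(\sup_q|\hat F^n(q)-F^n(q)|>t)\le 2e^{-2nt^2}$, and a DKW-type bound applies to $\hat{\bar F}$ as well. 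Choosing $t$ so that $2e^{-2nt^2}=\delta/2$ gives $t=\sqrt{\log(4/\delta)/(2n)}$ for each, so by a union bound both sup-bounds hold simultaneously with probability at least $1-\delta$. On that event, for every $q'$,
$$\big|\hat F^c(q')-F^c(q')\big|\le\frac{|\hat F^n(q')-F^n(q')|+\beta\,|\hat{\bar F}(q')-\bar F(q')|}{1-\beta}\le\frac{(1+\beta)\,t}{1-\beta}=\frac1h\sqrt{\frac{\log(4/\delta)}{2n}}=\Delta,$$
and in particular at the selected $q$. Combining with $p(y\in C_q(x))=F^c(q)$ and $\hat F^c(q)=1-\alpha$ yields exactly $|p(y\in C_q(x))-(1-\alpha)|\le\Delta$.

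The main obstacle, and the only place genuine care is required, is the concentration of $\hat{\bar F}$: it is not literally the empirical CDF of $n$ iid scalars but a per-point average over all $k$ class-scores, i.e.\ a ``Rao--Blackwellized'' monotone empirical process. One must either establish a DKW-type inequality for it directly (it is an average of $n$ iid random CDFs, so it concentrates at least as well as a plain empirical CDF), or pass through the $k$ per-class empirical CDFs $\hat F_j$ and absorb a $\log k$ into the log-factor, or have each user transmit the score of a single uniformly chosen pseudo-label so that $\hat{\bar F}$ becomes a bona fide empirical CDF of $S(x,U)$ to which DKW applies verbatim. Beyond that, only routine bookkeeping remains: clipping $\hat F^c$ to $[0,1]$ can only shrink the error (since $F^c\in[0,1]$), and the fact that a threshold with $\hat F^c(q)$ exactly equal to $1-\alpha$ may require the usual interpolation or $\lceil\cdot\rceil$ correction perturbs the bound by at most an $O(1/n)$ term already dominated by $\Delta$. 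If one insists on working only with a one-sided ``upper bound'' version of the noise correction (no access to $\hat{\bar F}$), the same argument goes through provided one also supplies a matching lower bound on $\bar F$ so that the two-sided estimate is retained.
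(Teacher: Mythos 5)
Your proposal is correct and follows essentially the same route as the cited source (the paper itself imports Theorem~\ref{theorem:nacp} from \citet{Penso_2024} without reproving it): the decomposition $F^n(q)=(1-\beta)F^c(q)+\beta F^r(q)$, the plug-in inversion, Hoeffding/DKW-type uniform concentration of the two empirical processes with a $\delta/2$ union bound, and the triangle inequality yielding the $\tfrac{1+\beta}{1-\beta}=\tfrac1h$ amplification are exactly the mechanics sketched in Section~2.2, and your constants reproduce $\Delta=\sqrt{\log(4/\delta)/(2nh^2)}$ exactly. The one genuine subtlety --- that $\hat F^r$ is an average of $n$ iid monotone $[0,1]$-valued random CDFs rather than a plain empirical CDF, so the uniform-in-$q$ bound needs its own (standard) justification --- is one you correctly identify and resolve.
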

Crucially, $\Delta(n,\beta,\delta)$ shrinks as the calibration size $n$ grows, making the coverage error negligible for sufficiently large datasets.
In the case that a coverage of at least $1-\alpha$ must be satisfied, we can apply a noise-robust CP with $1-\alpha+\Delta$ instead of $1-\alpha$.

\subsection{Local Differential Privacy (LDP)}
Traditional (central) differential privacy \citep{dwork2006differential} presumes a trusted curator who sees the raw data and then adds noise before publishing. In contrast, LDP \citep{duchi2013local,kairouz2016discrete, kasiviswanathan2011can} treats the aggregator as \emph{untrusted}: individual users randomize their own data \emph{locally} before sending it to the aggregator, thus ensuring strong privacy. LDP is considered a harder setting since noise insertion is done on the user side in a distributed manner, whereas in the centralized DP model the curator holds the entire data and can apply operations on the clean data.

\begin{definition}
    A discrete randomized mechanism $Q(\cdot)$ is $\varepsilon$-LDP if for any pair of input labels $y,y'\in\mathcal{Y}$ and any output $z$,
    \[
    Q(z\mid y) \;\le\; e^{\varepsilon}\,Q(z\mid y').
    \]
\end{definition}
This means that any two possible labels are (roughly) indistinguishable from the aggregator's perspective. A common mechanism is the \emph{$k$-ary randomized response} ($k$-RR) \citep{warner1965randomized,wang2017locally}. For a label $y\in\{1,\dots,k\}$, it outputs a noisy version $\tilde{y}$:
\[
p(\tilde{y}|y) =
\begin{cases}
 \tfrac{e^\varepsilon}{(k-1) + e^\varepsilon} & \mbox{if} \hspace{0.5cm}  \tilde{y}=y\\ \\
\tfrac{1}{(k-1) + e^\varepsilon} &
\mbox{if} \hspace{0.5cm}
\tilde{y} \ne y.
\end{cases}
\]
Translating $k$-RR into the format of a uniform noise model, we get Eq. (\ref{eq:flip}) with \(\beta = \frac{k}{k-1+e^{\epsilon}} \)
such that with probability $\beta$ label drawn uniformly and with $1-\beta$ remain unchanged.
This preserves label privacy, such that the aggregator cannot easily guess the user's true label from $\tilde{y}_i$. The parameter $\varepsilon$, known as the \emph{privacy loss}, controls the strength of the mechanism (lower $\varepsilon$ implies more noise and therefore a stronger privacy guarantee). Note that if $k=2$, we recover \emph{Warner's binary randomized response} (RR) \citep{warner1965randomized}, flipping the label with some probability.

\section{Local-DP Conformal Prediction}
\label{sec:ldp-approaches}

In this section we describe two complementary CP approaches under local differential privacy.
In the first approach, users do not access the model but instead provide their input features and a perturbed label. In the second approach, which enforces stricter privacy constraints, users add noise to their conformity score by binary search response. This method requires access to the classification model but preserves both data and label privacy.

\subsection{Local-DP Conformal Prediction on Labels (LDP-CP-L)}
\label{subsec:ldp-cp-labels}

\begin{figure}
    \centering
    \includegraphics[trim=0cm 1cm 0cm 0cm, width=0.80\linewidth]{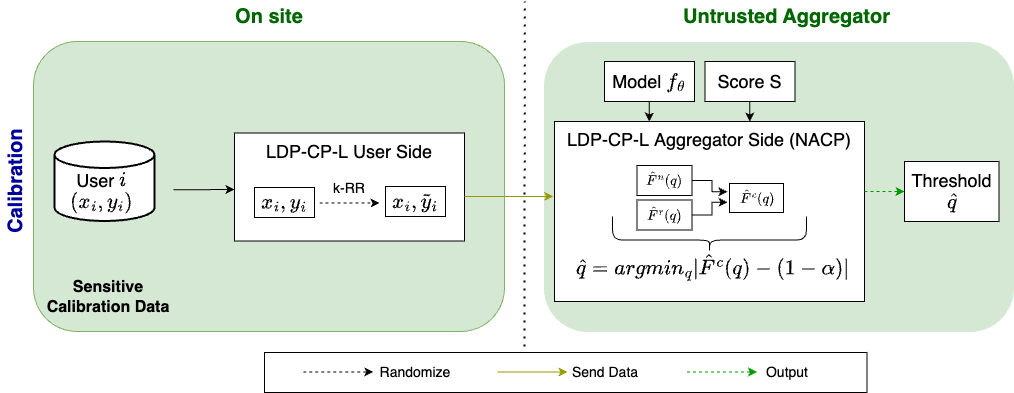}
    \caption{Local Differential Private Conformal Prediction on Labels (LDP-CP-L).}
    \label{fig:main_pipeline_l}
\end{figure}

In many real-world contexts, users lack the ability—or permission—to compute model-based scores on their own devices. This may be due to limited computational resources, a restricted API that only provides predictions, or a proprietary model architecture. To address these cases, we propose an LDP mechanism that randomizes each user’s label and then applies a noise-aware conformal calibration at the aggregator. This design protects sensitive labels while remaining model-agnostic, since the aggregator performs all the scoring steps. Figure \ref{fig:main_pipeline_l} illustrates the overall pipeline.
We next unpack LDP-CP-L by first describing the user's side procedure that applies to its data point, and then the procedure done by the untrusted aggregator to compute the conformal prediction threshold.

\begin{algorithm}
\caption{LDP-CP-L - User-side Procedure}
\label{alg:ldp_cp_labels_user}
\begin{algorithmic}[1]
\STATE \textbf{Input:} Calibration set $\{(x_i, y_i)\}_{i=1}^{n}$, privacy parameter $\epsilon$, number of classes $k$
\STATE \textbf{Output:}A noisy calibration set $\{(x_i, \tilde{y}_i)\}_{i=1}^{n}$
\FOR {each user $i \in \{1, \dots, n\}$}
    \STATE Apply $k$-ary Randomized Response (RR) to  label $y_i$ to obtain $\tilde{y}_i$: \\
    \(
    p(\tilde{y}_i \mid y_i)
    =\,
    {1}_{\{\tilde{y}_i=y_i\}}(1-\beta) + \frac{\beta}{k}
      \hspace{0.4cm}  \mbox{s.t.} \hspace{0.4cm}  \beta = \frac{k}{k-1+e^{\epsilon}}. \)
    \STATE Send $(x_i, \tilde{y}_i)$ to the aggregator.
\ENDFOR
\end{algorithmic}
\label{alg:1}
\end{algorithm}

\textbf{User's side Procedure.} Each user \(i\) with pair $(x_i, y_i)$
 first applies the  LDP mechanism - $k$-ary RR to the label $y_i$ which yields $\tilde{y}_i$
 using a uniform noise level  $\beta = \frac{k}{k-1+e^{\epsilon}}$.
 Then the user reports $(x_i, \tilde{y}_i)$ to the aggregator.

\textbf{Aggregator's Procedure.} Once the aggregator collects $\{x_i, \tilde{y}_i\}_{i=1}^{n}$ (it does not see $y_i$), it runs a ``noise-aware'' conformal method on the pairs $(x_i,\tilde{y}_i)$ \citep{Penso_2024}. Since the aggregator knows the noise model, it can estimate the coverage for the \emph{true} label.

A key advantage of this label-perturbation strategy is that the aggregator never observes raw labels, thus meeting label-LDP guarantees. Meanwhile, by modeling a  $k$-ary Randomized Response as a known noise channel, the aggregator recovers the necessary calibration adjustments to preserve near-correct coverage on the true labels. In what follows, we outline the procedure in more detail, leading to our first main theorem and the paper's contribution culminating in Theorem~\ref{theorem:main_ldpcpl}.

\begin{algorithm}
\caption{LDP-CP-L - Aggregator-side Procedure}
\label{alg:ldp_cp_labels_agg}
\begin{algorithmic}[1]
\STATE \textbf{Input:} Noisy calibration set $\{(x_i, \tilde{y}_i)\}_{i=1}^{n}$, privacy parameter $\epsilon$, number of classes $k$, target coverage $1-\alpha$, $\Delta$, $T$
\STATE \textbf{Output:} Threshold $q$ ensuring private coverage $1-\alpha-\Delta$ on true labels

\STATE Collect the noisy calibration set $\{(x_i, \tilde{y}_i)\}_{i=1}^{n}$.
\STATE Define functions $\hat{F}^n(q)$ and $\hat{F}^r(q)$ for candidate thresholds $q$:
\[ \hat{F}^n(q) = \frac{1}{n} \sum_{i=1}^{n} \mathbf{1}(S(x_i, \tilde{y}_i) \leq q), \quad
\hat{F}^r(q) = \frac{1}{n} \sum_{i=1}^{n} \frac{|C_q(x_i)|}{k}.
\]

\STATE Define:
\[ \hat{F}^c(q) = \frac{\hat{F}^n(q) - \beta \cdot \hat{F}^r(q)}{1 - \beta},
\hspace{0.4cm} s.t. \hspace{0.4cm}   \beta = \frac{k}{k-1+e^{\epsilon}}. \]
\STATE Initialize $s_{low}=0, s_{high}=1$
\FOR {$j=1,...,T$}
    \STATE Set $q^{(j)} = \frac{s_{low}+s_{high}}{2}$
    \STATE Obtain $Z^{(j)} = \hat{F}^c(q^{(j)})$
    \STATE \textbf{if} $Z^{(j)} \gt 1-\alpha + {\Delta}$ \textbf{then} $s_{high}=q^{(j)}$
    \STATE \textbf{else if} $Z^{(j)} \lt 1-\alpha $ \textbf{then} {$s_{low}=q^{(j)}$}
    \STATE \textbf{else} break
\ENDFOR
\STATE Return the threshold $q^{(j)}$
\end{algorithmic}
\label{alg:2}
\end{algorithm}

Next, we describe how the aggregator computes the noise-free CP threshold based on the noisy labels that were received. 
For each threshold $q$, let $F^c(q) = p\bigl(S(x,y)\le q\bigr)$, $F^n(q) = p\bigl(S(x,\tilde{y})\le q\bigr)$,  $F^r(q) = p\bigl(S(x,u)\le q\bigr), \text{ $u \sim U(1,\dots,k)$}$ coverage on true labels, noisy labels, and uniform respectively.
    It can be easily verified that
    \begin{equation}
    F^n(q) \;=\; (1-\beta)\,F^c(q)\;+\;\beta\,F^r(q).
    \label{fnq}
    \end{equation}
Given a noisy calibration set $\{(x_i,\tilde{y}_i)\}$, the aggregator can estimate $F^n(q)$ and $F^r(q)$ by
    \[
    \hat{F}^n(q)
    = \frac{1}{n}\sum_{i=1}^n {1}\bigl(S(x_i,\tilde{y}_i)\le q\bigr), \hspace{0.5cm} \hat{F}^r(q)
    = \frac{1}{n}\sum_{i=1}^n \frac{|C_q(x_i)|}{k}.
    \]
Thus, rearranging Eq. (\ref{fnq}) we obtain an estimation of $F^c(q)$:
\[
\hat{F}^c(q)
= \frac{\hat{F}^n(q) - \beta\hat{F}^r(q)}{\,1-\beta}.
\]

Hence, to find a threshold $q$ that yields coverage $1-\alpha$ on the \emph{true} labels in a private manner, we solve $\hat{F}^c(q)=1-\alpha$. In practice, we do a binary search over candidate thresholds which continues until either the estimate $\hat{Z}^{(j)}$ satisfies $1-\alpha \le \hat{Z}^{(j)} \le  1-\alpha + \Delta$ or the interval length $s_{high} - s_{low}$ becomes smaller than a predefined threshold $\tau$. This is exactly the ``noise-aware'' threshold that corrects for the $k$-RR noise mechanism. User's side and aggregator's side procedures are depicted in Algorithms \ref{alg:1} and \ref{alg:2} respectively.

\citet{Penso_2024} named this procedure  \textbf{NACP} (Noise-Aware Conformal Prediction).  While they were motivated by problems related to a general noisy channel (e.g. experts' mistakes/disagreements as to the true label), here we use the noisy channel as a privacy protection for the labels in the calibration set. As it turns out, using $k$-RR falls neatly into their paradigm and in turn yields Theorem~\ref{theorem:main_ldpcpl}.
Note that the $k$-RR mechanism is implemented here, instead of more advanced LDP methods, such as RAPPOR \citep{erlingsson2014rappor}, since it aligns well with the NACP framework.

\subsection{Local-DP Conformal Prediction on Scores (LDP-CP-S)}
\label{subsec:ldp-cp-scores}

In some scenarios, users may be able to compute the \emph{full conformity score} locally. Concretely, the aggregator (or model provider) sends the neural network’s parameters or logits to each user, who then computes the conformity score \(S(x_i, y_i)\) for the ground-truth label \(y_i\) on their own.
This approach leverages the ability to estimate quantiles of a distribution using noisy scores that are locally privatized by the users.
We employ the LDP-binary search algorithm described in  \citet{gaboardi2019locallyprivatemeanestimation} to estimate the $(1-\alpha)$-quantile of the scores.
This method is incorporated into the conformal prediction framework by estimating the $(1-\alpha)$-quantile of the conformity scores derived locally by the users. Once the quantile is estimated, it is used as the threshold for constructing prediction sets. The use of LDP ensures that the process is privacy-preserving, whereas the quantile estimation guarantees accurate calibration.
By combining the strengths of binary search and randomized response, this method offers a robust approach to privacy-preserving conformal prediction that is both practical and theoretically sound.
The settings are depicted in Figure \ref{fig:main_pipeline_s}. We next unpack LDP-CP-S by first describing the user’s side procedure that applies to
its data point, and then the procedure done by the untrusted aggregator to compute the
conformal prediction results.

\textbf{User-side Procedure.} For each user \(i\) with pair \((x_i, y_i)\):
\begin{enumerate}
    \item Compute the conformity score \( S_i = S(x_i, y_i) \) locally.
    \item Compare \( S_i \) with a threshold \( q(j) \) provided by the aggregator. (Note that each user has a single interaction with the aggregator -- see details in Aggregator's procedure).
    \item Return a binary response using randomized response (RR), ensuring \( \varepsilon \)-LDP.
\end{enumerate}

\begin{figure*}[t!]
    \centering
    \includegraphics[trim=0cm 1cm 0cm 0cm, width=0.80\linewidth]{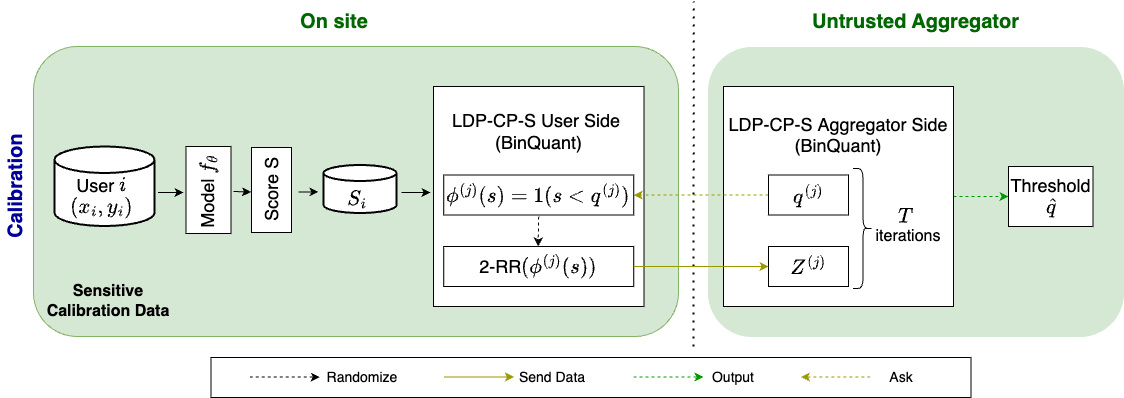}
    \caption{Local Differential Private Conformal Prediction on Scores (LDP-CP-S).}
    \label{fig:main_pipeline_s}
\end{figure*}
\begin{algorithm}
\caption{LDP-CP-S - both sides}
\label{alg:ldp_cp_s}
\begin{algorithmic}[1]
\STATE \textbf{Input:} Calibration set $\{(x_i, y_i)\}_{i=1}^{n}$, privacy parameter $\epsilon$, number of classes $k$, desired coverage $1-\alpha$, $\Delta$, $T$
\STATE \textbf{Output:} Threshold $q$ ensuring private coverage $1-\alpha-\Delta$ on true labels
\STATE Initialize $n'=\frac{n}{T}, s_{low}=0, s_{high}=1$
\FOR {$j=1,...,T$}
    \STATE Select users $\mathcal{U}^{(j)}=\{j \cdot n' + 1, j \cdot n' + 2,..., (j + 1) \cdot n'\}$
    \STATE Set $q^{(j)} = \frac{s_{low}+s_{high}}{2}$
    \STATE Individual users $i\in \mathcal{U}^{(j)}$: (1) compute their score $s=s(x_i, y_i)$, (2) sets $b_i = 1(s \lt q^{(j)})$, (3) sends the aggregator $RR(b_i)$
    \STATE Obtain $Z^{(j)} = \frac{e^\epsilon + 1}{e^\epsilon - 1}\cdot \frac{1}{n}\sum_{i\in \mathcal{U}^{(j)}} RR(b_i) - \frac{1}{e^\epsilon - 1}$
    \STATE \textbf{if} $Z^{(j)} \gt 1-\alpha + \Delta$ \textbf{then} $s_{high}=q^{(j)}$
    \STATE \textbf{else if} $Z^{(j)} \lt 1-\alpha  $ \textbf{then} {$s_{low}=q^{(j)}$}
    \STATE \textbf{else} break
\ENDFOR
\STATE Return $q^{(j)}$
\end{algorithmic}
\label{alg:3}
\end{algorithm}

\textbf{Aggregator Procedure.} The aggregator collects the binary responses from users over a binary search procedure. The process starts by defining an initial range that is guaranteed to contain the desired $(1 - \alpha)$-quantile. The range is repeatedly divided in half through a binary search procedure. At each step $j$, a midpoint $q(j)$ is calculated, and a subset of the users is used to privately estimate how many data points fall below this midpoint. This estimation is done using randomized response, which ensures that the algorithm complies with privacy guarantees. Depending on the results of this estimation, the algorithm updates the range: if too many data points are estimated to fall below the midpoint, the upper boundary is adjusted; if too few, the lower boundary is adjusted. Note that the subsets of users used at each step are disjoint, which assures that each user has at most one interaction with the aggregator. This process continues until either the estimate $\hat{Z}^{(j)}$ satisfies $1-\alpha \le \hat{Z}^{(j)} \le 1-\alpha + \Delta$ or the interval length $s_{high} - s_{low}$ becomes smaller than a predefined threshold $\tau$.
In Theorem~\ref{theorem:main_ldpcps} we give the concrete sample complexity, under which we can estimate $\hat{Z}^{(j)}$ both privately and accurately in all iterations of the binary search.
Once the aggregator finds the desired estimation, denoted \( \hat{q} \), the aggregator can now construct the prediction set as:
\(
   C_{\hat{q}}(x) \;=\; \{\, y \mid S(x,y) \le \hat{q}\}.
\)
User's side and aggregator's side procedures depicted in Algorithm box \ref{alg:3}.


\subsection{Theoretical Guarantees}
We now present our main theoretical results. Theorem \ref{theorem:main_ldpcpl} and \ref{theorem:main_ldpcps} describe the theoretical guarantees of LDP-CP-L and LDP-CP-S, respectively.
\begin{theorem}[\textbf{LDP-CP-L}]
    Fix $\alpha, \delta, \Delta>0$. There exists an $\epsilon$-local differentially private algorithm that draws $n = O\left(\frac{\log(\nicefrac{1}{\delta})}{\Delta^2h^2}\right)$ exchangeable samples from any admissible distribution $\mathcal{D}$, where $h = \frac{1-\beta}{1+\beta}$, and $\beta = \frac{k}{k-1+e^{\epsilon}}$, and, within at most $T = \lceil\log(\nicefrac{1}{\tau})\rceil$ iterations, produces an estimate $\hat{q}$ that satisfies $p(y\in C_{\hat{q}}(x) ) \;\ge\; 1-\alpha - \Delta$ with probability at least $1 - \delta$ (over the calibration set), where $1-\alpha$ is the desired coverage and $\tau$ is an a-priori bound on the length of an interval that can hold $\Delta$-probability mass.
\label{theorem:main_ldpcpl}
\end{theorem}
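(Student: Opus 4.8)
The plan is to split the argument into a privacy part, a statistical-accuracy part, and a binary-search-correctness part, of which only the last requires genuinely new work since the middle part is essentially Theorem~\ref{theorem:nacp} of \citet{Penso_2024}. For privacy: the user-side procedure (Algorithm~\ref{alg:1}) applies $k$-ary randomized response to the label $y_i$, and for any two labels $y,y'$ and any output $\tilde y$ the likelihood ratio $Q(\tilde y\mid y)/Q(\tilde y\mid y')$ equals $1$, $e^{\varepsilon}$, or $e^{-\varepsilon}$, so the randomization is $\varepsilon$-LDP as a mechanism on the label (the features $x_i$ are forwarded unchanged, so this is the label-DP guarantee discussed in the introduction). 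The aggregator sees only the privatized pairs $(x_i,\tilde y_i)$ and its own model, so every quantity it forms — the estimators $\hat F^n,\hat F^r,\hat F^c$ and the entire binary search of Algorithm~\ref{alg:2} — is post-processing and incurs no further privacy loss.

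For accuracy I would first record the identity that makes de-biasing work: conditioning on the calibration data, $\mathbb{E}[\mathbf{1}(S(x_i,\tilde y_i)\le q)] = (1-\beta)\,\mathbf{1}(S(x_i,y_i)\le q) + \beta\,|C_q(x_i)|/k$, which is the empirical version of Eq.~(\ref{fnq}) with $\beta=\tfrac{k}{k-1+e^{\varepsilon}}$; hence $\hat F^c(q)=(\hat F^n(q)-\beta\hat F^r(q))/(1-\beta)$ equals, in conditional expectation over the $k$-RR noise, the unobserved clean empirical CDF. The quantitatively important point — and the origin of $h=(1-\beta)/(1+\beta)$ — is that each summand $(\mathbf{1}(S(x_i,\tilde y_i)\le q)-\beta|C_q(x_i)|/k)/(1-\beta)$ always lies in an interval of length $(1+\beta)/(1-\beta)=1/h$, so Hoeffding gives $\Pr(|\hat F^c(q)-\mathbb{E}\hat F^c(q)|>t)\le 2\exp(-2nt^2h^2)$ for each fixed $q$. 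Combining this with a DKW-type bound to pass from the clean empirical CDF to the population coverage $F^c$ (using the exchangeable-sample reduction of \citet{Penso_2024}), and a union bound over the at most $T=\lceil\log(1/\tau)\rceil$ thresholds queried, I get $\sup_j|\hat F^c(q^{(j)})-F^c(q^{(j)})|\le\Delta/2$ with probability $\ge 1-\delta$ once $n=O(\log(T/\delta)/(\Delta^2h^2))=O(\log(1/\delta)/(\Delta^2h^2))$, as claimed.

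For the binary search, condition on the good event $|\hat F^c(q^{(j)})-F^c(q^{(j)})|\le\Delta/2$ at every queried threshold. I would maintain the invariant $F^c(s_{low})\le 1-\alpha\le F^c(s_{high})$: the bracket $[0,1]$ is valid at initialization for any admissible $\mathcal D$ (since $F^c(1)=1$), and each update preserves it because, e.g., $Z^{(j)}>(1-\alpha)+\Delta/2$ forces $F^c(q^{(j)})\ge Z^{(j)}-\Delta/2>1-\alpha$ before $s_{high}\leftarrow q^{(j)}$, and symmetrically for $s_{low}$. Since $y\in C_q(x)\iff S(x,y)\le q$, we have $\Pr(y\in C_q(x))=F^c(q)$, which is non-decreasing. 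If the loop breaks at step $j$ with $|Z^{(j)}-(1-\alpha)|\le\Delta/2$, then $|F^c(q^{(j)})-(1-\alpha)|\le\Delta$, so $\Pr(y\in C_{q^{(j)}}(x))\ge 1-\alpha-\Delta$. Otherwise the loop runs all $T$ iterations, so $s_{high}-s_{low}\le\tau$; by the a-priori assumption that no interval of length $\tau$ carries more than $\Delta$ probability mass, $F^c(s_{high})-F^c(s_{low})\le\Delta$, and since the returned $q^{(T)}$ lies between $s_{low}$ and $s_{high}$, monotonicity gives $\Pr(y\in C_{q^{(T)}}(x))=F^c(q^{(T)})\ge F^c(s_{low})\ge F^c(s_{high})-\Delta\ge 1-\alpha-\Delta$. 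Either way the returned threshold achieves coverage $\ge 1-\alpha-\Delta$, and the loop bound $T$ is immediate from halving the bracket.

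The hard part is not the concentration but this last step: one has to coordinate the slack constants ($\Delta/2$ inside the search versus $\Delta$ in the conclusion) and, crucially, handle the branch where the search exhausts its $T$ iterations without entering the acceptance window — this is exactly where the ``admissible distribution'' hypothesis (validity of the initial bracket) and the a-priori interval bound $\tau$ are needed, and where one must verify that the reported midpoint, not merely the final bracket, inherits the coverage guarantee. A secondary technical nuisance is justifying the empirical-CDF concentration for merely exchangeable rather than i.i.d.\ samples, which I would dispatch by conditioning on the multiset of calibration points and invoking the same reduction used by \citet{Penso_2024}.
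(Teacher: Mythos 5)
Your proposal is correct and follows the same route as the paper's proof: $\varepsilon$-LDP via $k$-RR plus post-processing, and coverage via the noise-aware de-biasing bound of \citet{Penso_2024} (Theorem~\ref{theorem:nacp}), whose Hoeffding-with-range-$1/h$ derivation you reconstruct consistently. Your third part is actually more complete than the paper's own argument, which simply substitutes into Theorem~\ref{theorem:nacp} and never verifies the binary-search logic --- in particular the bracket invariant, the union bound over the $T$ queried thresholds, and the exhaustion branch where $T=\lceil\log(\nicefrac{1}{\tau})\rceil$ and the assumption that no length-$\tau$ interval carries $\Delta$ mass are what rescue the coverage guarantee; these elements appear in the theorem statement but are unused in the paper's proof, so your filling them in is a genuine (and needed) addition rather than a deviation.
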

\begin{proof}
The LDP-CP mechanism applies k-RR to the input data, ensuring $\epsilon$-local differential privacy \citep{kairouz2016discrete}.
Additionally, using the post-processing property of differential privacy \citep{dwork2006differential}, it is safe to perform arbitrary computations on the output of a differentially private mechanism - which maintains the privacy guarantees of the mechanism. Therefore, since $\text{k-RR}(\{x_i,y_i\}_{i=1}^{n})$ satisfies $\epsilon$-local-differential privacy, and because NACP \citep{Penso_2024} is a deterministic or randomized post-processing function, it follows that NACP(k-RR$(\{x_i,y_i\}_{i=1}^{n})$) satisfies $\epsilon$-local-differential privacy.

The remainder of the proof focuses on the conformal prediction coverage guarantee bound. Given our k-RR($\epsilon$) $\epsilon$-LDP mechanism, we derive the noise rate $\beta = \frac{k}{k-1+e^{\epsilon}}$. Substituting $\beta$, and $n$ into Theorem \ref{theorem:nacp} we obtain $\Delta(n, \beta, \delta)$ such that $ p(y\in C_{\tilde q}(x)) \;\ge\; 1-\alpha - \Delta.$
\end{proof}

\begin{theorem}[\textbf{LDP-CP-S}]
Fix $\alpha, \delta, \Delta>0$. There exists an $\epsilon$-local differentially private algorithm that draws $n = O\left(\frac{T}{\Delta^2}(\frac{e^\epsilon + 1}{e^\epsilon - 1})^2\log(\nicefrac{T}{\delta}))\right)$ exchangeable samples from any admissible distribution $\mathcal{D}$ and, within at most $T = \lceil\log(\nicefrac{1}{\tau})\rceil$ iterations, produces an estimate $\hat{q}$ that satisfies $p(y\in C_{\hat{q}}(x)) \;\ge\; 1-\alpha - \Delta$ with probability at least $1 - \delta$, where $1-\alpha$ is the desired coverage and $\tau$ is an a-priori bound on the length of an interval that can hold $\Delta$-probability mass.
\label{theorem:main_ldpcps}
\end{theorem}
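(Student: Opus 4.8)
The plan is to establish Theorem~\ref{theorem:main_ldpcps} in two parts, mirroring the structure of the proof of Theorem~\ref{theorem:main_ldpcpl}: first the privacy claim, then the coverage-accuracy claim. For privacy, I would observe that each user $i$ participates in exactly one iteration $j$ (since the sets $\mathcal{U}^{(j)}$ are disjoint), and in that single interaction sends only $RR(b_i)$, a binary randomized response on the bit $b_i = \mathbf{1}(s_i < q^{(j)})$. Warner's binary RR with flip probability $\tfrac{1}{e^\varepsilon+1}$ is $\varepsilon$-LDP, and since $b_i$ is a (deterministic) post-processing of the user's private data $(x_i,y_i)$ and $q^{(j)}$ is computed only from other users' reports, the whole transcript seen by the aggregator is $\varepsilon$-LDP; the threshold updates and the final $\hat q$ are post-processing and inherit the guarantee.

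For accuracy, the key observation is that $Z^{(j)} = \frac{e^\varepsilon+1}{e^\varepsilon-1}\cdot\frac{1}{n'}\sum_{i\in\mathcal{U}^{(j)}} RR(b_i) - \frac{1}{e^\varepsilon-1}$ is the standard unbiased debiasing of binary RR, so $\mathbb{E}[Z^{(j)}] = \frac{1}{n'}\sum_{i\in\mathcal{U}^{(j)}} b_i$, which by exchangeability has expectation $F(q^{(j)}) := \Pr(S(x,y) < q^{(j)})$, the true CDF of the conformity score evaluated at the current midpoint. Each $RR(b_i)\in\{0,1\}$, so $Z^{(j)}$ is an average of $n'$ independent bounded random variables scaled by $\frac{e^\varepsilon+1}{e^\varepsilon-1}$; by Hoeffding's inequality, $\Pr\bigl(|Z^{(j)} - F(q^{(j)})| > \Delta/2\bigr) \le 2\exp\!\bigl(-\tfrac{n'\Delta^2}{2}\bigl(\tfrac{e^\varepsilon-1}{e^\varepsilon+1}\bigr)^2\bigr)$. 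Setting $n' = n/T = \Theta\!\bigl(\tfrac{1}{\Delta^2}\bigl(\tfrac{e^\varepsilon+1}{e^\varepsilon-1}\bigr)^2\log(T/\delta)\bigr)$ makes this at most $\delta/T$, and a union bound over the at most $T$ iterations gives that \emph{all} estimates $Z^{(j)}$ are within $\Delta/2$ of the true CDF value simultaneously with probability $\ge 1-\delta$. This yields the stated sample complexity $n = O\!\bigl(\tfrac{T}{\Delta^2}(\tfrac{e^\varepsilon+1}{e^\varepsilon-1})^2\log(T/\delta)\bigr)$.

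Conditioning on that good event, I would argue correctness of the binary search. Since each $Z^{(j)}$ is within $\Delta/2$ of $F(q^{(j)})$: whenever the algorithm moves $s_{high}$ down it is because $Z^{(j)} > 1-\alpha+\Delta/2$, hence $F(q^{(j)}) > 1-\alpha$, so the true $(1-\alpha)$-quantile lies below $q^{(j)}$ and is retained in $[s_{low},s_{high}]$; symmetrically when $s_{low}$ moves up. Thus the invariant that $[s_{low},s_{high}]$ contains a point $q^\star$ with $F(q^\star)\ge 1-\alpha$ is maintained. After $T=\lceil\log(1/\tau)\rceil$ halvings the interval has length $\le\tau$, and by the a-priori assumption an interval of length $\tau$ holds at most $\Delta$ probability mass, so $F(\hat q)\ge F(q^\star)-\Delta\ge 1-\alpha-\Delta$; if the loop instead breaks early, then $Z^{(j)}\in[1-\alpha-\Delta/2,\,1-\alpha+\Delta/2]$, so $F(\hat q)\ge 1-\alpha-\Delta$ directly. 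Either way $\Pr(y\in C_{\hat q}(x)) = F(\hat q) \ge 1-\alpha-\Delta$ (up to the usual $\{<\}$ versus $\{\le\}$ discretization, negligible for continuous scores or absorbable into $\Delta$). I expect the main obstacle to be making the binary-search invariant fully rigorous in a way consistent with the bound $\tau$ on the mass of a short interval — in particular handling the early-termination branch and the edge cases where the quantile sits near an endpoint — rather than the concentration argument, which is routine.
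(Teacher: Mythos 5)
Your proposal is correct and follows the same route the paper intends: the paper's own proof is essentially a two-sentence deferral to the locally private binary-search quantile estimator of \citet{gaboardi2019locallyprivatemeanestimation}, whereas you reconstruct that argument in full (per-user single-interaction RR for privacy, debiased-RR Hoeffding concentration with a union bound over the $T$ disjoint batches to get the stated $n$, and the interval invariant plus the $\tau$-mass assumption for the final coverage bound). The only point worth flagging is that your analysis implicitly corrects the normalization in Algorithm~\ref{alg:3}, which writes $\frac{1}{n}$ where the unbiased estimator over a batch of size $n'=n/T$ requires $\frac{1}{n'}$, exactly as you use it.
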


\begin{proof}
    The privacy proof of a $\epsilon$-LDP quantile binary search algorithm can be found in \citet{gaboardi2019locallyprivatemeanestimation}. Users start by computing scores locally and then a local differentially private quantile binary search algorithm is taken place \citep{gaboardi2019locallyprivatemeanestimation}.
    $\epsilon$-LDP follows immediately from the fact that the only time we access the data is via randomized response. The output of the algorithm is the $(1-\alpha)$'th quantile that one would obtain by applying conformal prediction to the clean data, yet it is recovered solely from the privatized (noisy) scores.
\end{proof}

\section{Practical Considerations}

\subsection{LDP-CP-L vs. LDP-CP-S}
\label{sec:s_vs_l}

The proposed methods, LDP-CP-L and LDP-CP-S, offer distinct approaches for achieving local differential privacy in conformal prediction, each is tailored to specific privacy setups and has different coverage guarantees. An in-depth understanding of their trade-offs is essential to determine their suitability for various scenarios.

LDP-CP-L focuses on achieving local differential privacy by perturbing the labels ($y$) while exposing the features ($x$) to the untrusted aggregator. This approach ensures privacy for the labels, which are typically considered more sensitive in many applications. This mechanism is particularly advantageous in scenarios where users are resource-constrained, since they only need to perturb their labels locally before sending $(x, \tilde{y})$ to the aggregator. This design also keeps the model parameters and scoring functions secret from the users, because all computations related to the score are performed centrally. Consequently, LDP-CP-L is well-suited for real-world calibration datasets of moderate size, such as those containing several thousand records, where the additional noise introduced by the mechanism remains manageable. However, a notable drawback of LDP-CP-L is its sensitivity to the number of classes $k$ in the dataset. As $k$ increases, the calibration error ($\Delta$) grows, potentially compromising coverage guarantees for datasets with a large number of classes. Additionally, while the exposure of $x$ provides practical utility by allowing centralized score computation, it introduces privacy concerns. This drawback can be partially mitigated by employing the shuffle model of differential privacy, which adds an extra layer of anonymity to the users' data (See further discussion in Section~\ref{sec:on_exp_x_main}). Furthermore, privatizing $x$ would result in a substantial insertion of noise, thereby leading to a significant degradation in the accuracy of the algorithms.

By contrast, LDP-CP-S achieves privacy at the score level by having users compute scores locally and perturb their responses before submitting them to the aggregator. This design ensures that both $x$ and $y$ remain private and are never exposed to the aggregator, making LDP-CP-S particularly suitable for scenarios where feature privacy is paramount. Unlike LDP-CP-L, the performance of LDP-CP-S is independent of the number of classes $k$. However, this approach imposes additional computational requirements on the users, who must perform local computations involving the model and scoring function. This requirement necessitates sharing the model with users, which might raise concerns about intellectual property or model misuse. Furthermore, LDP-CP-S requires a larger calibration dataset ($n$) to achieve a sufficiently small calibration error, potentially limiting its applicability in scenarios with limited data availability.

In summary, the choice between LDP-CP-L and LDP-CP-S depends on the specific privacy and computational constraints of the application. LDP-CP-L is better suited for scenarios where label privacy is the primary concern, datasets are of moderate size (and larger). Its design minimizes user-side computations and protects the model from exposure. Conversely, LDP-CP-S is preferable when both feature and label privacy are critical, and when users have the computational resources to perform local scoring. Its robustness to the number of classes makes it a strong candidate for applications with a large class set, provided a sufficiently large calibration dataset is available. By carefully considering these trade-offs, practitioners can select the most appropriate method for their privacy-preserving conformal prediction tasks. In the experiment section, we report a numerical comparison of methods accuracy ($\Delta_L, \Delta_S$) as a function of the calibration set size $n$ and the number of classes $k$ (Figure \ref{fig:s_vs_l}).

\subsection{The Shuffle Model of Differential Privacy}
\label{sec:on_exp_x_main}
The shuffle model of differential privacy enhances privacy guarantees by introducing an additional layer of anonymization between users and the data aggregator. In this model, each user applies a local randomizer to their data and then sends the output to a secure shuffler, which permutes the messages uniformly at random before forwarding them to the aggregator. This anonymization mechanism breaks the association between individual users and their messages, thereby amplifying privacy guarantees beyond those attainable in the purely local model.

A notable benefit of the shuffle model is its capacity for \emph{privacy amplification}~\citep{cheu2022differentialprivacyshufflemodel}. If each user applies an $\varepsilon$-LDP mechanism before sending their message, the effective privacy loss can be reduced to approximately $\epsilon^{\text{eff}}=\nicefrac{\varepsilon}{\sqrt{n}}$ in the shuffled output. This amplification enables stronger privacy guarantees with the same local noise or, conversely, allows for reduced noise to achieve a given privacy target—thereby improving utility in downstream tasks. This is particularly beneficial in regimes where moderately high local privacy levels (e.g., $\varepsilon > 1$) would otherwise impose significant performance degradation.

Given these advantages, our approach incorporates the shuffle model to improve the privacy-utility trade-off of our mechanisms. We found that adopting the shuffle model allows us to retain decentralization and local control while achieving significantly improved accuracy through amplification. Importantly, this integration does not alter the algorithmic structure of our mechanisms but rather augments their privacy analysis and performance guarantees under realistic assumptions of an honest-but-curious shuffler. As such, the shuffle model serves not only as a technical enhancement but also as a practical enabler of more effective private learning in our setting.

\section{Experiments}
\label{sec:exps}

In this section, we evaluate the capabilities of our LDP-CP algorithms on various medical imaging datasets and address the utility-coverage tradeoff.

 \textbf{Compared Methods.} Our method takes an existing conformity score $S$ and computes a threshold $q$ that considers the injected noise level.  We implemented two popular conformal prediction scores, namely APS
\citep{romano2020classification} and HPS \citep{vovk2005conformal}. For each score $S$, we compared the following CP  methods: (1) Not-Private-CP  with coverage guarantee $1-\alpha$ -  using a calibration set with clean labels (2) LDP-CP-\{S,L\} with coverage guarantee $1-\alpha - \Delta$. and (3) LDP-CP-\{S,L\}* with coverage guarantee $1-\alpha$. We share our code for  reproducibility\footnote{\url{https://github.com/cobypenso/local-differential-private-conformal-prediction}}.

\begin{figure}[H]
    \centering
    \includegraphics[trim=0cm 0.5cm 0cm 0cm, width=0.95\linewidth]{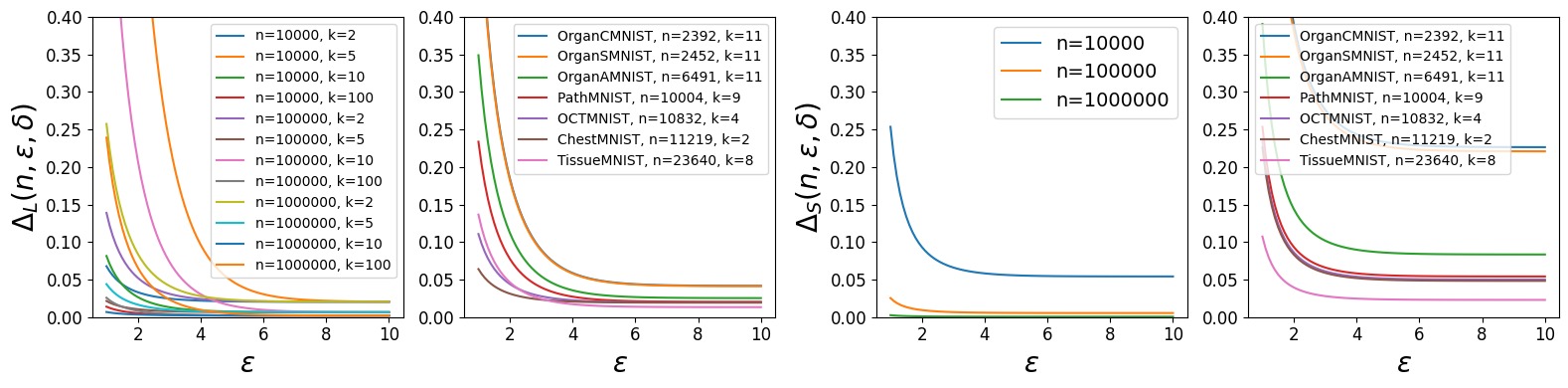}
    \quad \quad  \quad \quad \quad  \quad \quad  \quad \quad \quad \quad \quad \quad \quad \quad \quad \quad \quad \quad \quad  \quad \quad  \quad   \quad (a) LDP-CP-L - $\Delta_L$ \quad \quad \quad  \quad \quad \quad \quad \quad \quad \quad \quad \quad (b) LDP-CP-S - $\Delta_S$ \\
    \caption{CP correction terms $\Delta_L,\Delta_S$ as a function of $\epsilon$ privacy parameter across different dataset configurations of $n$ and $k$ \textbf{without the shuffle model}.}
    \label{fig:tradeoff-general-l-and-s}
    \includegraphics[trim=0cm 0.5cm 0cm 0cm, width=0.95\linewidth]{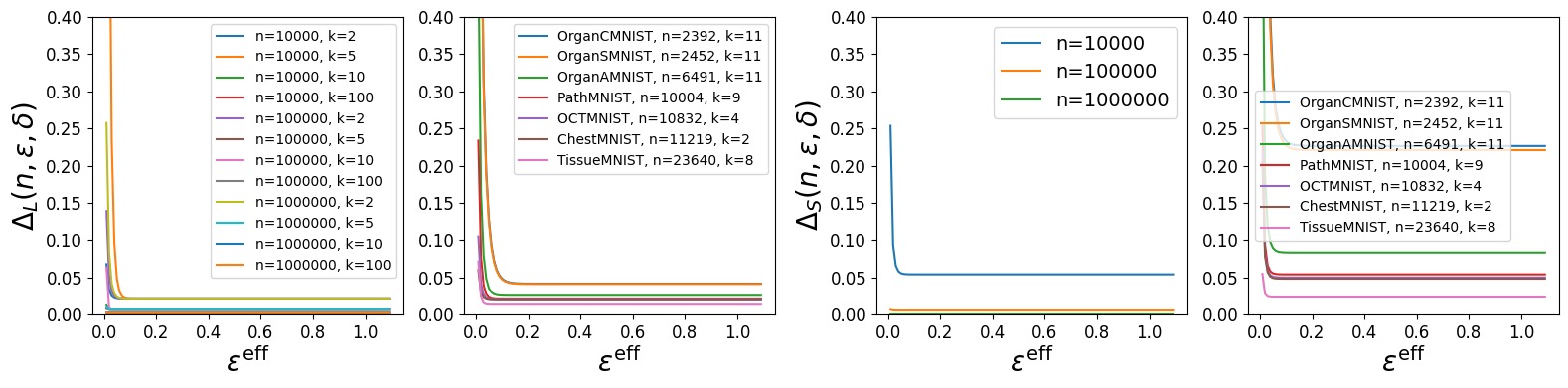}
    \quad \quad  \quad \quad \quad  \quad \quad  \quad \quad \quad \quad \quad \quad \quad \quad \quad \quad \quad \quad \quad  \quad \quad  \quad   \quad (a) LDP-CP-L - $\Delta_L$ \quad \quad \quad  \quad \quad \quad \quad \quad \quad \quad \quad \quad (b) LDP-CP-S - $\Delta_S$ \\
    \caption{CP correction terms $\Delta_L,\Delta_S$ as a function of $\epsilon^{\text{eff}}$ privacy parameter across different dataset configurations of $n$ and $k$ \textbf{with the shuffle model}.}
    \label{fig:tradeoff-general-l-and-s-shuffle}
\end{figure}

\begin{figure*}[ht!]
    \centering
    \includegraphics[trim=0cm 1.5cm 0cm 0cm, width=1.02\linewidth]{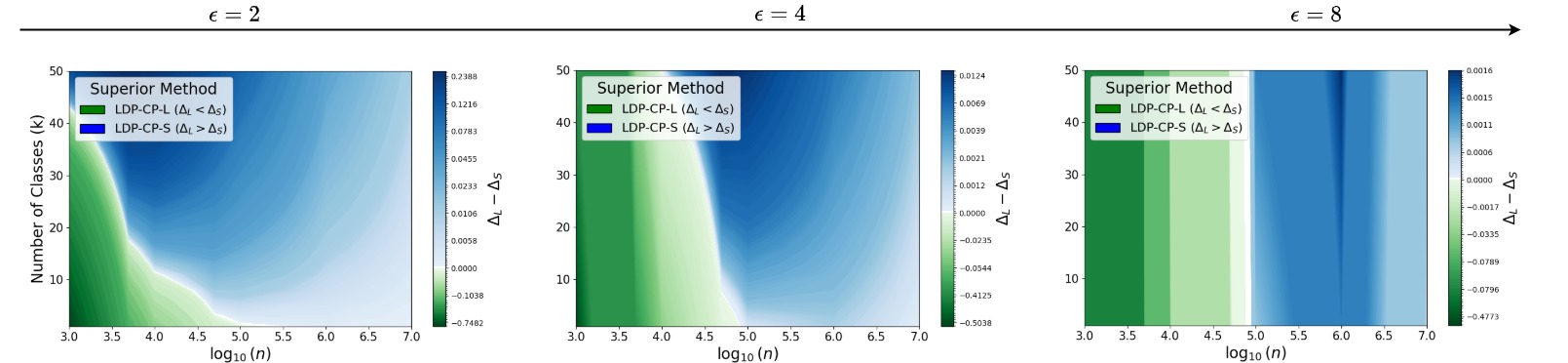}
    \caption{Comparison of $\Delta_L$ and $\Delta_S$ as a function of the number of classes $k$ and dataset size $n$, for $\epsilon=2,4,8$.}
    \label{fig:s_vs_l}
\end{figure*}

 \textbf{Datasets.}
We present results on several publicly available medical imaging classification datasets \citep{medmnistv2}. 
 The \textbf{TissuMNIST} dataset \citep{medmnistv1,medmnistv2}
 contains 236,386 human kidney cortex cells,  organized into 8 categories. Each gray-scale
image is $32 \times 32 \times 7$ pixels. The  2D projections were obtained by taking the maximum pixel value along the axial-axis of each pixel, and were resized into $28 \times 28$ gray-scale images \citep{woloshuk2021situ}.
The  \textbf{OrganSMNIST} dataset \citep{medmnistv2}  contains 25,221 images of abdominal CT in eleven classes. The images are  $28 \times 28$ in size. Here, we used a train/calibration/test split of 13,940/2,452/8,829 images. \textbf{OrganAMNIST} and  \textbf{OrganCMNIST} are simliar datasets, the differences of Organ\{A,C,S\}MNIST are the views and dataset size. Lastly, \textbf{OCTMNIST} Retina is an OCT image dataset.

\begin{figure}[H]
    \centering
    \includegraphics[trim=0.5cm 1cm 0cm 0cm, width=0.30\textwidth]{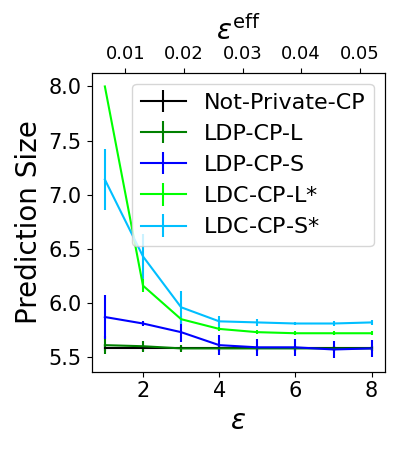} 
    \hspace{1cm}
    \includegraphics[trim=0.5cm 1cm 0cm 0cm, width=0.30\textwidth]{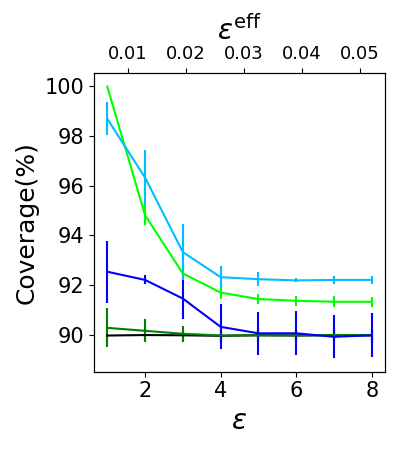}
    \caption{Size of prediction set (left) and coverage (right) as a function of the privacy $\epsilon$ (bottom x-axis) and effective privacy $\epsilon^{\text{eff}}$ (top x-axis). We show the (mean $\pm$ std) on TissueMNIST and APS score.}
    \label{fig:tissuemnist_calib_with_g}
\end{figure}

\textbf{Utility-Coverage Tradeoff}. 
\label{exp:utility_comp}
Theorems \ref{theorem:main_ldpcpl} and \ref{theorem:main_ldpcps} state that LDP-CP-L and LDP-CP-S are $\epsilon$-LDP with a conformal prediction coverage guarantee correction term of $\Delta_L(n, \beta(\epsilon,k), \delta)$ and $ \Delta_S(n,\epsilon,\delta)$, which depend on the number of samples n, the privacy $\epsilon$, $\delta$, and $\Delta_L$  also depends on the number of classes $k$. Figure \ref{fig:tradeoff-general-l-and-s} explores the trade-offs of $\epsilon,n,k$ on the conformal prediction correction terms $\Delta_L,\Delta_S$, on different configurations and various medical image datasets \citep{medmnistv2} respectively. The results show both the finite-sample practicality and theoretical asymptotic performance of LDP-CP on existing medical datasets that are medium in size and simulated scenarios with growing $n$. For the majority of the evaluated datasets with $\epsilon=3$, LDP-CP-L maintains correction terms that become negligible for $\alpha=0.1$ and higher ($1-\alpha$ is the coverage). $\Delta_L,\Delta_S \xrightarrow{} 0$ as $n \xrightarrow{} \infty$. While as the number of samples grows $\Delta_S$ goes to 0, for the evaluated medical datasets (with $n$ fixed) $\Delta_S>\Delta_L$, i.e. being inferior to $\Delta_L$, and is therefore more applicable to datasets with larger calibration sets.
Figure ~\ref{fig:tradeoff-general-l-and-s-shuffle} covers the same experiment setup, this time when the shuffle model is deployed and presents the correction terms as a function of the effective privacy. The effective privacy $\epsilon^{\text{eff}}$ ranges between 0 and 0.2, which is much more practical than the $\epsilon \geq 3$ acquired without the shuffle model.  

Figure \ref{fig:s_vs_l} compares $\Delta_{S}$ and $\Delta_{L}$ as a function of n, k, and $\epsilon$. Results show that roughly speaking for $n \geq 10^5 \xrightarrow{} \Delta_S \leq \Delta_L$. In addition, as the number of classes $k$ increases (particularly for large values of $k$, e.g., 100 or 1000), LDP-CP-S dominates, whereas LDP-CP-L exhibits deteriorating performance.

\textbf{Conformal Prediction Results}. 
Table \ref{tab:results_compact} reports the mean size and coverage when applying conformal prediction in a non $\epsilon$-LDP setting (Not-Private-CP), and when satisfying the local differentially private property. We report the results of two variations of our methods where the first consists of calibration using $1-\alpha$ and the second variation (denoted by $^*$) using $1-\alpha+\Delta$ in the calibration phase. For the experiment in Table \ref{tab:results_compact}, we used $\epsilon=4$ and $\alpha=0.1$ over 100 different data splits (seeds). Table \ref{tab:results_compact} also provides the effective privacy $\epsilon^{\text{eff}}$ per dataset to show increased practicality when the shuffle model is incorporated.  We can see that when we use $1-\alpha+\Delta$ in our calibration algorithm, we indeed obtain the required $1-\alpha$ coverage.

\begin{table}[t!]
\centering
\caption{Calibration results for HPS and APS conformal scores across various datasets, using $\epsilon=4$, $\epsilon^{\text{eff}}=\frac{\epsilon}{\sqrt{n}}$, and $\alpha=0.1$ on 100 different seeds.}
\label{tab:results_compact}
\scalebox{.8}{
\begin{tabular}{l|l|cccc}
\toprule
\textbf{Dataset} & \textbf{Method} & \multicolumn{2}{c}{\textbf{HPS}} & \multicolumn{2}{c}{\textbf{APS}} \\
\cmidrule(lr){3-4} \cmidrule(lr){5-6}
 & & size $\downarrow$ & coverage (\%) & size $\downarrow$ & coverage (\%) \\
\midrule
           & Not-Private-CP & 2.57 $\pm$ 0.03 & 90.06 $\pm$ 0.99 & 2.61 $\pm$ 0.03 & 90.06 $\pm$ 0.97 \\  
           & LDP-CP-L & 2.56 $\pm$ 0.04 & 89.99 $\pm$ 1.01 & 2.61 $\pm$ 0.03 & 90.02 $\pm$ 1.01 \\  
OCTMNIST  & LDP-CP-S & 2.58 $\pm$ 0.04 & 90.21 $\pm$ 0.63 & 2.67 $\pm$ 0.08 & 90.84 $\pm$ 1.45 \\  
 ($\epsilon^{\text{eff}}=0.038$)          & LDP-CP-L*& 2.76 $\pm$ 0.04 & 92.22 $\pm$ 0.92 &  2.79 $\pm$ 0.03 & 92.28 $\pm$ 0.87 \\ 
           & LDP-CP-S*& 2.97 $\pm$ 0.07 & 94.38 $\pm$ 0.81 &  2.99 $\pm$ 0.06 & 94.35 $\pm$ 0.70 \\ \hline 
           & Not-Private-CP & 5.55 $\pm$ 0.02 & 90.00 $\pm$ 0.24 & 5.58 $\pm$ 0.02 & 89.96 $\pm$ 0.24 \\  
           & LDP-CP-L & 5.54 $\pm$ 0.02 & 89.97 $\pm$ 0.29 & 5.58 $\pm$ 0.02 & 89.97 $\pm$ 0.27 \\  
TissueMNIST  & LDP-CP-S & 6.12 $\pm$ 0.01 & 95.35 $\pm$ 0.07 & 5.61 $\pm$ 0.09 & 90.32 $\pm$ 0.91 \\  
($\epsilon^{\text{eff}}=0.026$)           & LDP-CP-L*& 5.71 $\pm$ 0.02 & 91.68 $\pm$ 0.27 &  5.76 $\pm$ 0.02 & 91.70 $\pm$ 0.25 \\ 
           & LDP-CP-S*& 6.12 $\pm$ 0.01 & 95.35 $\pm$ 0.07 &  5.83 $\pm$ 0.05 & 92.32 $\pm$ 0.45 \\ \hline 
           & Not-Private-CP & 1.93 $\pm$ 0.05 & 90.09 $\pm$ 0.66 & 2.35 $\pm$ 0.05 & 90.10 $\pm$ 0.55 \\  
           & LDP-CP-L & 1.88 $\pm$ 0.07 & 89.49 $\pm$ 0.93 & 2.30 $\pm$ 0.09 & 89.63 $\pm$ 0.91 \\  
OrganSMNIST  & LDP-CP-S & 1.61 $\pm$ 0.21 & 84.81 $\pm$ 4.48 & 2.09 $\pm$ 0.07 & 87.38 $\pm$ 1.37 \\  
($\epsilon^{\text{eff}}=0.080$)           & LDP-CP-L*& 2.77 $\pm$ 0.22 & 95.35 $\pm$ 0.74 &  3.35 $\pm$ 0.22 & 95.45 $\pm$ 0.74 \\ 
           & LDP-CP-S*& 3.90 $\pm$ 0.03 & 97.75 $\pm$ 0.06 &  4.75 $\pm$ 0.03 & 98.40 $\pm$ 0.07 \\ \hline 
           & Not-Private-CP & 1.19 $\pm$ 0.02 & 89.99 $\pm$ 0.46 & 1.61 $\pm$ 0.02 & 90.02 $\pm$ 0.38 \\  
           & LDP-CP-L & 1.31 $\pm$ 0.00 & 92.17 $\pm$ 0.09 & 1.60 $\pm$ 0.03 & 89.94 $\pm$ 0.54 \\  
OrganAMNIST  & LDP-CP-S & 1.15 $\pm$ 0.05 & 88.89 $\pm$ 0.96 & 1.67 $\pm$ 0.21 & 90.35 $\pm$ 3.10 \\  
($\epsilon^{\text{eff}}=0.049$)           & LDP-CP-L*& 1.43 $\pm$ 0.03 & 93.62 $\pm$ 0.34 &  1.89 $\pm$ 0.05 & 93.51 $\pm$ 0.51 \\ 
           & LDP-CP-S*& 1.88 $\pm$ 0.19 & 96.52 $\pm$ 0.82 &  2.44 $\pm$ 0.19 & 96.80 $\pm$ 0.60 \\ \hline 
           & Not-Private-CP & 1.18 $\pm$ 0.03 & 89.99 $\pm$ 0.71 & 1.56 $\pm$ 0.03 & 90.02 $\pm$ 0.65 \\  
           & LDP-CP-L & 1.30 $\pm$ 0.00 & 91.96 $\pm$ 0.13 & 1.52 $\pm$ 0.04 & 89.36 $\pm$ 0.91 \\  
OrganCMNIST  & LDP-CP-S & 0.95 $\pm$ 0.08 & 82.44 $\pm$ 2.74 & 1.39 $\pm$ 0.07 & 86.59 $\pm$ 2.44 \\  
($\epsilon^{\text{eff}}=0.081$)           & LDP-CP-L*& 1.63 $\pm$ 0.10 & 95.21 $\pm$ 0.74 &  2.05 $\pm$ 0.13 & 95.21 $\pm$ 0.80 \\ 
           & LDP-CP-S*& 2.47 $\pm$ 0.02 & 98.18 $\pm$ 0.06 &  2.90 $\pm$ 0.04 & 98.15 $\pm$ 0.10 \\ \hline 

\bottomrule
\end{tabular}
}

\end{table}

Next, we experimented with different values of $\epsilon$, namely $\epsilon \in [1,8]$. Recall that with the shuffle model the effective privacy is $\epsilon^{\text{eff}}=\frac{\epsilon}{\sqrt{n}}$. Figure \ref{fig:tissuemnist_calib_with_g} shows the coverage, size of the prediction set, and effective privacy on the TissueMNIST dataset. As expected as $\epsilon$ grows, $\Delta$ decreases and CP results get closer to the Non-Private-CP. In addition, LDP-CP-L performs on a par with the Non-Private-CP for all $\epsilon$'s, and LDP-CP-S for $\epsilon^{\text{eff}}\geq 0.03 $ ($\epsilon\geq4$), showcasing their true applicability.
\section{Conclusion and Open Problems}

We presented two complementary CP approaches that explicitly account for local differential privacy on labels and scores. By leveraging different randomized response mechanisms, we incorporate them into the CP procedure to provide \emph{valid coverage} on the \emph{true} labels, while guaranteeing that $\epsilon$-local DP protects each user's label. For LDP-CP-L, this involves label perturbation and noise-aware calibration, making it suitable for privacy-sensitive settings where user computation is limited. For LDP-CP-S, the method allows users to compute and perturb scores locally, enabling calibration in scenarios where model access is feasible. Our framework bridges the gap between robust uncertainty quantification and strong local privacy for user data. It provides theoretical guarantees for both approaches while addressing practical considerations in privacy-sensitive classification tasks. By offering guidelines for selecting  LDP-CP-L or LDP-CP-S, we empower practitioners to balance privacy, computational constraints, and performance requirements, thus enabling secure and accurate predictions in real-world applications.

An important open problem is the integration of more advanced methods in Local Differential Privacy (LDP), such as RAPPOR or the mechanisms proposed by \citet{bassily2017practicallocallyprivateheavy}, into the LDP-CP-L framework. A key challenge in this integration lies in determining whether the aggregator can effectively process data from users who submit a potentially large subset of labels. Addressing this issue would enhance the robustness and scalability of the LDP-CP-L framework, enabling it to accommodate more complex privacy-preserving mechanisms while maintaining accuracy and efficiency.

Another open problem arises from the consideration of a more stringent setup for the LDP-CP problem, in which computing scores on the user's side is prohibited (as in LDP-CP-L) while simultaneously ensuring the privacy of both the signal/features $x$ and the label $y$. In this scenario, neither the LDP-CP-L nor the LDP-CP-S approaches are suitable, highlighting a critical gap in existing methodologies. Developing privacy-preserving solutions that can operate under such restrictive conditions without compromising utility remains an open research challenge and an important direction for future work.

\bibliography{refs}

@string {nips = {Advances in Neural Information Processing Systems (NeurIPs)}}

@string {miccai={International Conference on Medical Image Computing and Computer-Assisted Intervention (MICCAI)}}

@string {isbi = {The IEEE International Symposium on Biomedical Imaging (ISBI)}}

@article{olsson2022estimating,
  title={Estimating diagnostic uncertainty in artificial intelligence assisted pathology using conformal prediction},
  author={Olsson, Henrik and Kartasalo, Kimmo and Mulliqi, Nita and others},
  journal={Nature Communications},
  volume={13},
  number={1},
  pages={7761},
  year={2022}
}

@article{sesia2023adaptive,
  title={Adaptive conformal classification with noisy labels},
  author={Sesia, Matteo and Wang, YX and Tong, Xin},
  journal={arXiv preprint arXiv:2309.05092},
  year={2023}
}

@article{medmnistv2,
    title={{MedMNIST} v2-A large-scale lightweight benchmark for {2D} and {3D} biomedical image classification},
    author={Yang, Jiancheng and Shi, Rui and Wei, Donglai and Liu, Zequan and Zhao, Lin and Ke, Bilian and Pfister, Hanspeter and Ni, Bingbing},
    journal={Scientific Data},
    volume={10},
    number={1},
    pages={41},
    year={2023}
}

@inproceedings{ghazi2021,
  title={Deep learning with label differential privacy},
  author={Ghazi, Badih and Golowich, Noah and Kumar, Ravi and Manurangsi, Pasin and Zhang, Chiyuan},
  booktitle=nips,
    year={2021}
}

@article{romano2020classification,
  title={Classification with valid and adaptive coverage},
  author={Romano, Yaniv and Sesia, Matteo and Candes, Emmanuel},
  journal={Advances in Neural Information Processing Systems},
  year={2020}
}

@article{angelopoulos2023conformal,
  title={Conformal prediction: A gentle introduction},
  author={Angelopoulos, Anastasios N and Bates, Stephen and others},
  journal={Foundations and Trends in Machine Learning},
  volume={16},
  number={4},
  pages={494--591},
  year={2023}
  }

@article{woloshuk2021situ,
  title={In situ classification of cell types in human kidney tissue using {3D} nuclear staining},
  author={Woloshuk, Andre and Khochare, Suraj and Almulhim, Aljohara F and McNutt, Andrew T and Dean, Dawson and Barwinska, Daria and Ferkowicz, Michael J and Eadon, Michael T and Kelly, Katherine J and Dunn, Kenneth W and others},
  journal={Cytometry Part A},
  volume={99},
  number={7},
  pages={707--721},
  year={2021},
  publisher={Wiley Online Library}
}

@inproceedings{lu2022improving,
  title={Improving trustworthiness of {AI} disease severity rating in medical imaging with ordinal conformal prediction sets},
  author={Lu, Charles and Angelopoulos, Anastasios N and Pomerantz, Stuart},
  booktitle=miccai,
    year={2022}
}

@inproceedings{lu2022fair,
  title={Fair conformal predictors for applications in medical imaging},
  author={Lu, Charles and Lemay, Andr{\'e}anne and Chang, Ken and H{\"o}bel, Katharina and Kalpathy-Cramer, Jayashree},
  booktitle={Proceedings of the AAAI Conference on Artificial Intelligence},
   year={2022}
}

@book{vovk2005conformal,
  title={Algorithmic learning in a random world},
  author={Vovk, Vladimir and Gammerman, Alexander and Shafer, Glenn},
  volume={29},
  year={2005},
  publisher={Springer}
}

@inproceedings{Penso_2024,
  title={Conformal prediction of classifiers with many classes based on noisy labels},
  author={Penso, Coby and Goldberger, Jacob and Fetaya, Ethan},
   booktitle={Proceedings of the Symposium on Conformal and Probabilistic Prediction with Applications},
  year={2025}
}

@inproceedings{Penso_20242,
title={A Conformal Prediction Score that  is Robust to Label Noise},
author = {Penso, Coby and Goldberger, Jacob},
  booktitle={MICCAI Int. Workshop on Machine Learning in Medical Imaging (MLMI)}, 
  year={2024}
}

@article{einbinder2022conformal,
  title={Conformal Prediction is Robust to Label Noise},
  author={Einbinder, Bat-Sheva and Bates, Stephen and Angelopoulos, Anastasios N and Gendler, Asaf and Romano, Yaniv},
  journal={arXiv preprint arXiv:2209.14295},
  year={2022}
}

@inproceedings{medmnistv1,
    title={{MedMNIST} Classification Decathlon: A Lightweight AutoML Benchmark for Medical Image Analysis},
    author={Yang, Jiancheng and Shi, Rui and Ni, Bingbing},
    booktitle=isbi,
    year={2021}
}

@article{angelopoulos2022private,
  title={Private prediction sets},
  author={Angelopoulos, Anastasios Nikolas and Bates, Stephen and Zrnic, Tijana and Jordan, Michael I},
  year={2022},
  publisher={PubPub},
  journal={Harvard Data Science Review}
}

@article{warner1965randomized,
  title={Randomized response: A survey technique for eliminating evasive answer bias},
  author={Warner, Stanley L},
  journal={Journal of the American statistical association},
  volume={60},
  number={309},
  pages={63--69},
  year={1965},
  publisher={Taylor \& Francis}
}

@inproceedings{kairouz2016discrete,
  title={Discrete distribution estimation under local privacy},
  author={Kairouz, Peter and Bonawitz, Keith and Ramage, Daniel},
  booktitle={International Conference on Machine Learning},
  pages={2436--2444},
  year={2016},
  organization={PMLR}
}

@inproceedings{wang2017locally,
  title={Locally differentially private protocols for frequency estimation},
  author={Wang, Tianhao and Blocki, Jeremiah and Li, Ninghui and Jha, Somesh},
  booktitle={26th USENIX Security Symposium (USENIX Security 17)},
  pages={729--745},
  year={2017}
}

@inproceedings{duchi2013local,
  title={Local privacy and statistical minimax rates},
  author={Duchi, John C and Jordan, Michael I and Wainwright, Martin J},
  booktitle={2013 IEEE 54th annual symposium on foundations of computer science},
  pages={429--438},
  year={2013},
  organization={IEEE}
}

@misc{gaboardi2019locallyprivatemeanestimation,
      title={Locally Private Mean Estimation: Z-test and Tight Confidence Intervals}, 
      author={Marco Gaboardi and Ryan Rogers and Or Sheffet},
      year={2019},
      eprint={1810.08054},
      archivePrefix={arXiv},
      primaryClass={cs.DS},
      url={https://arxiv.org/abs/1810.08054}, 
}

@inproceedings{erlingsson2014rappor,
  title={Rappor: Randomized aggregatable privacy-preserving ordinal response},
  author={Erlingsson, {\'U}lfar and Pihur, Vasyl and Korolova, Aleksandra},
  booktitle={Proceedings of the 2014 ACM SIGSAC conference on computer and communications security},
  pages={1054--1067},
  year={2014}
}

@misc{apple2017privacy,
  author       = {Apple},
  title        = {Learning with Privacy at Scale},
  year         = {2017},
  url          = {https://docs-assets.developer.apple.com/ml-research/papers/learning-with-privacy-at-scale.pdf},
  note         = {Accessed: [Insert Access Date]}
}

@inproceedings{beimel2013private,
  title={Private learning and sanitization: Pure vs. approximate differential privacy},
  author={Beimel, Amos and Nissim, Kobbi and Stemmer, Uri},
  booktitle={International Workshop on Approximation Algorithms for Combinatorial Optimization},
  pages={363--378},
  year={2013},
  organization={Springer}
}

@inproceedings{dwork2006differential,
  title={Differential privacy},
  author={Dwork, Cynthia},
  booktitle={International Colloquium on Automata, Languages, and Programming},
  pages={1--12},
  year={2006},
  organization={Springer}
}

@article{kasiviswanathan2011can,
  title={What can we learn privately?},
  author={Kasiviswanathan, Shiva Prasad and Lee, Homin K and Nissim, Kobbi and Raskhodnikova, Sofya and Smith, Adam},
  journal={SIAM Journal on Computing},
  volume={40},
  number={3},
  pages={793--826},
  year={2011},
  publisher={SIAM}
}

@misc{bassily2017practicallocallyprivateheavy,
      title={Practical Locally Private Heavy Hitters}, 
      author={Raef Bassily and Kobbi Nissim and Uri Stemmer and Abhradeep Thakurta},
      year={2017},
      eprint={1707.04982},
      archivePrefix={arXiv},
      primaryClass={cs.DS},
      url={https://arxiv.org/abs/1707.04982}, 
}

@misc{cheu2022differentialprivacyshufflemodel,
      title={Differential Privacy in the Shuffle Model: A Survey of Separations}, 
      author={Albert Cheu},
      year={2022},
      eprint={2107.11839},
      archivePrefix={arXiv},
      primaryClass={cs.CR},
      url={https://arxiv.org/abs/2107.11839}, 
}

@inproceedings{Clarkson2024,
author={ Clarkson, Jase and  Xu, Wenkai  and  i Cucuringu, Mihai and  Reinert, Gesine},
title={Split Conformal Prediction under Data Contamination},
booktitle={Proceedings of the Thirteenth Symposium on Conformal and Probabilistic Prediction with Applications},
year={2024}
}

\end{document}